\pdfoutput=1
\documentclass{article}

\usepackage[utf8]{inputenc} 
\usepackage[T1]{fontenc}    

\usepackage[bitstream-charter]{mathdesign}
\usepackage{amsmath}
\usepackage[scaled=0.92]{PTSans}

\usepackage[
  paper  = letterpaper,
  left   = 1.65in,
  right  = 1.65in,
  top    = 1.0in,
  bottom = 1.0in,
  ]{geometry}

\usepackage[usenames,dvipsnames,table]{xcolor}
\definecolor{shadecolor}{gray}{0.9}

\usepackage[final,expansion=alltext]{microtype}
\usepackage[english]{babel}
\usepackage[parfill]{parskip}
\usepackage{afterpage}
\usepackage{framed}

{\endMakeFramed}

\usepackage{lineno}

\usepackage{ragged2e}

\newcounter{parcount}

\usepackage{graphicx}
\usepackage{wrapfig}
\usepackage[labelfont=bf,font=footnotesize,width=.9\textwidth]{caption}
\usepackage[format=hang]{subcaption}

\usepackage{booktabs,multirow,multicol}       

\usepackage[algoruled,algo2e]{algorithm2e}
\usepackage{listings}
\usepackage{fancyvrb}
\fvset{fontsize=\normalsize}

\usepackage[colorlinks,linktoc=all]{hyperref}
\usepackage[all]{hypcap}
\hypersetup{citecolor=MidnightBlue}
\hypersetup{linkcolor=MidnightBlue}
\hypersetup{urlcolor=MidnightBlue}

\usepackage[nameinlink]{cleveref}

\usepackage[acronym,nowarn]{glossaries}

\lstdefinestyle{mystyle}{
    commentstyle=\color{OliveGreen},
    keywordstyle=\color{BurntOrange},
    numberstyle=\tiny\color{black!60},
    stringstyle=\color{MidnightBlue},
    basicstyle=\ttfamily,
    breakatwhitespace=false,
    breaklines=true,
    captionpos=b,
    keepspaces=true,
    numbers=left,
    numbersep=5pt,
    showspaces=false,
    showstringspaces=false,
    showtabs=false,
    tabsize=2
}
\usepackage{amsthm}

\usepackage{centernot}
\usepackage{nicefrac}       
\usepackage{mathtools}
\usepackage{amsbsy}
\usepackage{amstext}
\usepackage{thmtools}
\usepackage{thm-restate}

\usepackage{float}
\usepackage{bbm}
\usepackage{pgfplots}
\usepgfplotslibrary{groupplots}
\usetikzlibrary{
  positioning,
  fit,
  backgrounds,
  shadows,
  arrows.meta
}

\begingroup
    \makeatletter
    \@for\theoremstyle:=definition,remark,plain\do{%
        \expandafter\g@addto@macro\csname th@\theoremstyle\endcsname{%
            \addtolength\thm@preskip\parskip
            }%
        }
\endgroup

\crefname{lemma}{lemma}{lemmas}
\Crefname{lemma}{Lemma}{Lemmas}
\crefname{thm}{theorem}{theorems}
\Crefname{thm}{Theorem}{Theorems}
\crefname{prop}{proposition}{propositions}
\Crefname{prop}{Proposition}{Propositions}
\crefname{assumption}{assumption}{assumptions}
\crefname{assumption}{Assumption}{Assumptions}

\usepackage{booktabs,arydshln}
\makeatletter
\def\adl@drawiv#1#2#3{%
        \hskip.5\tabcolsep
        \xleaders#3{#2.5\@tempdimb #1{1}#2.5\@tempdimb}%
                #2\z@ plus1fil minus1fil\relax
        \hskip.5\tabcolsep}
\newcommand{\cdashlinelr}[1]{%
  \noalign{\vskip\aboverulesep
           \global\let\@dashdrawstore\adl@draw
           \global\let\adl@draw\adl@drawiv}
  \cdashline{#1}
  \noalign{\global\let\adl@draw\@dashdrawstore
           \vskip\belowrulesep}}
\makeatother

\renewcommand{\epsilon}{\varepsilon}

\declaretheorem[style=plain,name=Theorem]{theorem}

\declaretheorem[style=plain,sibling=theorem,name=Proposition]{proposition}

\declaretheorem[style=definition,sibling=theorem,name=Example]{example}

\newenvironment{example*}
 {\pushQED{\qed}\example}
 {\popQED\endexample}
\numberwithin{equation}{section}

\DeclareMathOperator*{\argmin}{argmin}
\DeclareMathOperator*{\argmax}{argmax}

\newcommand{\EE}{\mathbb{E}}

\renewcommand{\Pr}{\mathbb{P}}

\newcommand{\given}{\mid}

\providecommand\given{} 
\newcommand\SetSymbol[1][]{
  \nonscript\,#1:\nonscript\,\mathopen{}\allowbreak}
\DeclarePairedDelimiterX\Set[1]{\lbrace}{\rbrace}%
{ \renewcommand\given{\SetSymbol[]} #1 }

\newcommand{\Ind}{\mathbbm{1}}

\usepackage{csquotes}
\usepackage[%
minnames=1,maxnames=99,maxcitenames=2,
style=alphabetic,
doi=false,
url=false,
giveninits=true,
hyperref,
natbib,
backend=bibtex,
sorting=nyt,
backref=true
]{biblatex}%
\renewbibmacro{in:}{%
  \ifentrytype{article}{}{\printtext{\bibstring{in}\intitlepunct}}}

\renewbibmacro*{journal}{%
  \iffieldundef{journaltitle}
    {}
    {\printtext[journaltitle]{%
       \printfield[noformat]{journaltitle}%
       \setunit{\subtitlepunct}%
       \printfield[noformat]{journalsubtitle}}}}

\DeclareFieldFormat{sentencecase}{\MakeSentenceCase*{#1}}

\renewbibmacro*{title}{%
  \ifthenelse{\iffieldundef{title}\AND\iffieldundef{subtitle}}
    {}
    {\ifthenelse{\ifentrytype{article}\OR\ifentrytype{inbook}%
      \OR\ifentrytype{incollection}\OR\ifentrytype{inproceedings}%
      \OR\ifentrytype{inreference}\OR\ifentrytype{misc}}
      {\printtext[title]{%
        \printfield[sentencecase]{title}%
        \setunit{\subtitlepunct}%
        \printfield[sentencecase]{subtitle}}}%
      {\printtext[title]{%
        \printfield[titlecase]{title}%
        \setunit{\subtitlepunct}%
        \printfield[titlecase]{subtitle}}}%
     \newunit}%
  \printfield{titleaddon}}

\AtEveryBibitem{%
\ifentrytype{article}{
    \clearfield{urldate}%
    \clearfield{eprint}
    \clearfield{eid}
}{}
\ifentrytype{book}{
    \clearfield{url}%
    \clearfield{urldate}%
    \clearfield{eprint}
}{}
\ifentrytype{collection}{
    \clearfield{url}%
    \clearfield{urldate}%
    \clearfield{eprint}
}{}
\ifentrytype{incollection}{
    \clearfield{url}%
    \clearfield{urldate}%
    \clearfield{eprint}
}{}
}

\AtEveryBibitem{
    \clearfield{pages}
    \clearfield{review}%
    \clearfield{series}
    \clearfield{volume}
    \clearfield{month}
    \clearfield{isbn}
    \clearfield{issn}
    \clearlist{location}
    \clearfield{series}
    \clearlist{publisher}
    \clearname{editor}
}{}

\crefformat{equation}{(#2#1#3)}
\crefformat{figure}{Figure~#2#1#3}
\crefname{example}{Example}{Examples}
\crefname{lemma}{Lemma}{Lemmas}
\crefname{cor}{Corollary}{Corollaries}
\crefname{theorem}{Theorem}{Theorems}
\crefname{assumption}{Assumption}{Assumptions}

\usepackage{enumitem} %
\usepackage[separate-uncertainty=true,multi-part-units=single]{siunitx} %

\declaretheoremstyle[
spacebelow=\parsep,
    spaceabove=\parsep,
  mdframed={
    backgroundcolor=gray!10!white,
    hidealllines=true, 
    innertopmargin=8pt, 
    innerbottommargin=4pt, 
    skipabove=8pt,
    skipbelow=10pt,
    nobreak=true
}
]{grayboxed}

\crefname{gassumption}{Assumption}{Assumptions}

\usepackage{thm-restate}

\usepackage{xcolor}

\definecolor{WowColor}{rgb}{.75,0,.75}
\definecolor{SubtleColor}{rgb}{0,0,.50}

\newcounter{margincounter}

\usepackage[affil-it]{authblk}

\title{Expected Reward Prediction, with Applications to Model Routing}
\date{}
\author[1,2]{Kenan Hasanaliyev}
\author[1,3]{Silas Alberti}
\author[4]{Jenny Hamer}
\author[4]{Dheeraj Rajagopal}
\author[4]{Kevin Robinson}
\author[4]{Jasper Snoek}
\author[4,5]{Victor Veitch}
\author[4]{Alexander Nicholas D'Amour}
\affil[1]{Stanford University}
\affil[2]{Inception Labs}
\affil[3]{Cognition Labs}
\affil[4]{Google DeepMind}
\affil[5]{University of Chicago}

\begin{document}
\maketitle

\begin{abstract}
 Reward models are a standard tool to score responses from LLMs. Reward models are built to rank responses to a fixed prompt sampled from a single model, for example to choose the best of $n$ sampled responses. In this paper, we study whether scores from response-level reward models lifted to score a \emph{model's} suitability for a prompt, prior to seeing responses from that model. Specifically, we show that it is straightforward to predict the expected reward that an LLM would earn from the reward model under repeated sampling.
 Further, we show that these expected reward predictions are precise and discriminative enough to support an application to a model routing protocol that routes prompts to models at inference time to maximize reward while controlling computational cost. We demonstrate the performance of this routing procedure on the open-perfectblend dataset, using a model pool composed of Llama3.1-Instruct 8B/70B, Gemma2-IT 9B/27B, and Gemma1-IT 7B models. Our simple expected reward prediction--based routing (ERP) outperforms baselines that route prompts to models with the best average performance within each prompt's category, and explains the success of more complex routing protocols that implicitly estimate an expected reward. Our approach has the added advantage of being trivially extensible as new models are added to the pool.
\end{abstract}

\section{Introduction}

Reward models are commonly used in large language model (LLM) alignment, sampling, and evaluation \citep[see, e.g.][]{christiano2017deep,stiennon2020learning,gao2023scaling,wang2024secrets}. A reward model is a function that takes a prompt $x$ and a response $y$ and returns a score $r(x,y)$ quantifying how good the response is for the prompt. Notice that this makes no reference to language models---the reward is a property of text, not of a generative model. In this paper, we are interested in understanding how good a \emph{model} is for a given prompt. More precisely, we are interested in lifting a reward function on \emph{responses} to a reward function on \emph{models}.

The aim here can be understood as producing a LLM-level reward function that predicts \emph{a priori} how well a random sample from the LLM can be expected to perform in response to the prompt $x$. 
Such a priori predictions would be useful for a number of inference time operations, such as model routing and prompt modification.
Moreover, obtaining model-level rewards from response-level rewards would be especially convenient since there has already been enormous research and development effort put into creating high quality response-level rewards \citep{lambert2024rewardbench}.
In this paper, we formalize the a priori reward of a language model $\pi$ as $\EE_{Y \sim \pi(y \given x)}[r(x, Y)]$, the expected value of the response level reward of a random sample drawn from the LLM.
The key question is whether it is possible to predict this quantity.

It is not clear that predicting the expected reward \emph{a priori} is actually possible, for at least two reasons.
First, there is no guarantee that the generating LLM is well-behaved enough for its prompt-specific behavior to be easily predicted.
There are two potentially unpredictable elements: (1) the sensitivity of the model to specifics of the prompt, and (2) the distribution of responses that a non-deterministic model can give to a prompt.
If either aspect of the generating model is not well-behaved, it may not be practical to predict $\EE_{Y \sim P(y \given x)}[r(x, Y)]$, which summarizes a large range of behavior of the model.
Secondly, the large majority of reward models used in practice are trained to encode the \emph{relative} quality of two responses to the same prompt. That is, theoretically reward models only guarantee that $r(x, y_1) - r(x,y_0)$ is a meaningful quantity for all pairs of responses $y_0, y_1$, but allow an arbitrary prompt-dependent value to be added to each reward (which cancels out when we take the difference). A model's expected reward for a given prompt depends on this arbitrary value, and thus need not be learnable even in principle.\footnote{Formally, the expected value is not identified under the Bradley-Terry model.} 
Thus, to the extent that prompt-wise expected reward is predictable, we would expect it to be a consequence of the implementation of the reward model as a fine-tuned pretrained language model.

For these reasons, the scope of our contribution is primarily empirical. The main finding of this paper is that it is in fact possible to predict the expected reward with high fidelity using realistic language models and reward functions. Moreover, we find that this prediction is possible even using an astonishingly simple model: a linear probe trained on an off-the-shelf embedding representation of the prompt (\Cref{fig:workflow}).
Finally, we demonstrate that this predictive power is not merely an artifact of identifying universally ``good'' or ``bad'' prompts: we show that predicted expected reward can be used to discriminate between models that would provide better or worse responses to the specific prompt. Specifically, we demonstrate this with an application to model routing. In the model routing application, we take in a prompt $x$ and decide which model it should be optimally served to. We find that using lifted rewards leads to large improvements over any fixed model, particularly in applications where querying each model has variable cost (\Cref{fig:pareto-OpenAssistant}).

\begin{figure*}[t]
    \centering
    \begin{subfigure}[b]{0.50\textwidth}
    \centering
    \raisebox{1.0cm}{
\scalebox{0.38}{%
\begin{tikzpicture}[
    font=\sffamily,
    >=latex,
    line width=0.7pt,
    mynode/.style={
        draw,
        rounded corners,
        align=center,
        drop shadow,
        minimum width=3.4cm,
        minimum height=1.4cm
    },
    node distance=1.5cm and 2.5cm
]

\colorlet{cTrain}{orange!10}
\colorlet{cTest}{blue!10}
\colorlet{cPrompt}{yellow!25}
\colorlet{cLLM}{cyan!25}
\colorlet{cResponse}{orange!25}
\colorlet{cReward}{green!20}
\colorlet{cEmbed}{violet!15}
\colorlet{cModel}{red!15}

\node[mynode, fill=cPrompt] (prompt) {%
  \textbf{Prompt}\\
  \small ``What is the capital of France?''%
};

\node[mynode, fill=cLLM, below=1.2cm of prompt] (llm) {%
  \textbf{LLM}%
};

\node[mynode, fill=cResponse, below left=1.0cm and 1.4cm of llm] (resp1) {%
  \textbf{Response 1}\\
  \small ``Paris is the capital.''\\
  \small Reward = $1.0$
};
\node[mynode, fill=cResponse, below=1.0cm of llm] (resp2) {%
  \textbf{Response 2}\\
  \small ``The capital is Berlin.''\\
  \small Reward = $0.2$
};
\node[mynode, fill=cResponse, below right=1.0cm and 1.4cm of llm] (resp3) {%
  \textbf{Response 3}\\
  \small ``France has no capital.''\\
  \small Reward = $0.0$
};

\node[mynode, fill=cReward, below=1.2cm of resp2] (reward) {%
  \textbf{Expected Reward}\\
  \small Mean = $0.4$ (Ground Truth)
};

\node[mynode, fill=cEmbed, right=6.0cm of llm] (embedding) {%
  \textbf{Prompt Embedding}\\
  \small $(3.2, -1.1, 0.7)$
};

\node[mynode, fill=cModel, below=1.2cm of embedding] (model) {%
  \textbf{Linear Model}%
};

\draw[->] (prompt) -- (llm);
\draw[->] (llm) -- (resp1);
\draw[->] (llm) -- (resp2);
\draw[->] (llm) -- (resp3);
\draw[->] (resp1) -- (reward);
\draw[->] (resp2) -- (reward);
\draw[->] (resp3) -- (reward);

\draw[->] (prompt.east) to[out=0, in=135] (embedding);
\draw[->] (embedding) -- (model);

\draw[->, dashed] (model.south) to[bend left=20]
  node[below, font=\large, xshift=0.5cm]{Predict} (reward.east);

\begin{pgfonlayer}{background}
\node[fill=cTrain, 
      rounded corners,
      draw=black!50,
      fit=(llm) (resp1) (resp2) (resp3),
      label={[font=\Large, xshift=-2.0cm]above left:Train},
      inner sep=0.3cm] {};
\end{pgfonlayer}

\begin{pgfonlayer}{background}
\node[fill=cTest,
      rounded corners,
      draw=black!50,
      fit=(embedding) (model),
      label={[font=\Large]above:Test},
      inner sep=0.3cm] {};
\end{pgfonlayer}

\end{tikzpicture}
}
}
        \caption{}
        \label{fig:workflow}
    \end{subfigure}
    \hfill
    \begin{subfigure}[b]{0.42\textwidth}
        \centering
        \includegraphics[width=\textwidth]{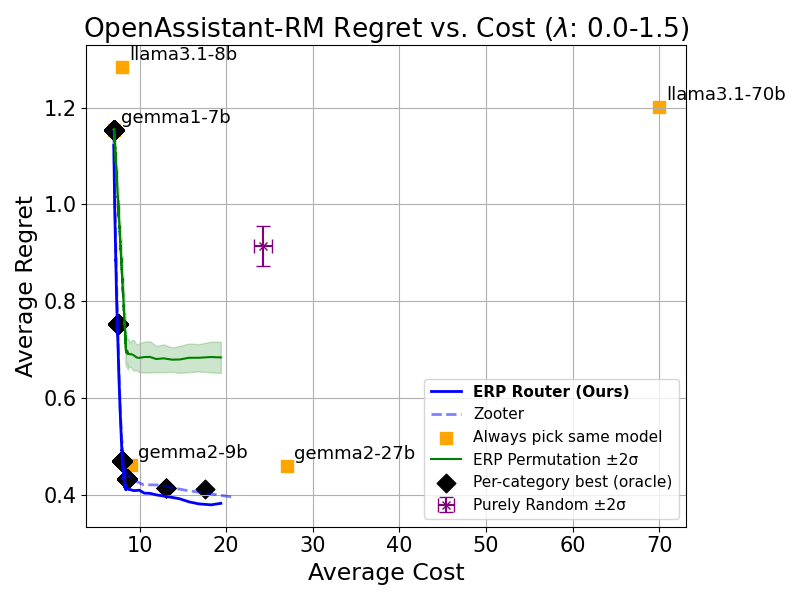}
        \caption{}
        \label{fig:pareto-OpenAssistant}
    \end{subfigure}
    \caption{(Left) \textbf{Expected reward prediction workflow.} We train a linear model on an embedding of the prompt to predict the expected reward that the model would earn responding to the prompt. (Right) \textbf{Expected reward prediction (ERP) can be used to route queries to the best, cost-effective model for that query.} The Pareto frontier (blue) shows that an ERP-based routing policy dominates baselines that don't discriminate between models in the context of each prompt, regardless of cost sensitivity. Additional regret-cost tradeoff plots for other RMs can be found in \Cref{fig:pareto-curves-combined}.}
    \label{fig:big}
\end{figure*}

\section{Preliminaries}
\subsection{Language Models}
In this paper, we view a large language model $\pi$ as a kernel that maps prompts $x$ to probability distributions $\pi(Y \given x)$ over responses $y$. Our interest is in predicting aspects of the response \emph{a priori}. Formally, this is equivalent to characterizing (properties of) the probability distribution $\pi(Y \given x)$ from the prompt. In particular, we will be interested in learning functions of the form $\text{ER}_\pi(x) := \EE_{Y \sim \pi(Y \given x)}[r(x, Y)]$.

\subsection{Reward Modeling}
A central problem in generative language model evaluation is scoring open-ended outputs that vary on a large number of dimensions that are meaningful to humans.
Pre-specifying these dimensions and scoring responses along each would be onerous, especially for hard-to-define concepts like ``helpfulness''.
Reward models avoid this problem by inferring an overall notion of ``goodness'' that is revealed by preference data.
Preference data has the form $(x, y_-, y_+)$, where $x$ is the input prompt, $y_+$ and $y_-$ are the preferred and dispreferred candidate responses.
A reward model translates this preference-level data into a response-wise score $r(x, y)$, so that for a given prompt, $r(x, y) > r(x, y')$ implies that $y$ is more likely to be preferred to $y'$ by raters that generated the preference data.

The standard approach to reward modeling is to do this translation by maximizing the Bradley-Terry log-likelihood.
\begin{align}
\begin{aligned}
    r^*(x, y) =\argmax_r  \EE_{x \sim p_{\mathcal X}(x)}[\log\sigma(r(x, y_+) - r(x, y_-))] \label{eq:BT loss}
\end{aligned}
\end{align}

A useful reward score $r(x, y)$ is a complex function of $x$ and $y$, requiring semantic understanding of the prompt and the appropriateness of the response. 
For this reason, reward models are usually obtained by fine-tuning a large language model, which bake in the requisite semantic understanding \citep[see, e.g.,][for a review]{wang2024secrets}.

Initially, reward models were trained on narrow preference datasets in the context of specific tasks, to target relatively narrow notions of ``goodness'', such as harmfulness and helpfulness in assistant conversations \citep{bai2022training}, or the usefulness of a summary in summarization tasks \citep{stiennon2020learning}.
However, increasingly, general purpose reward models are being trained that can measure ``goodness'' on a variety of downstream tasks.
Reward models of this type are compared on the RewardBench benchmark \citet{lambert2024rewardbench}, and high-performing general purpose reward models appear on the RewardBench leaderboard (\url{https://huggingface.co/spaces/allenai/reward-bench}).

\subsection{Experimental Setup}
The aim of this paper is to understand whether, and how, the expected reward of language model $\pi$ for reward function $r$ can be predicted from a prompt.

\paragraph{Prompt Dataset.}
To explore the predictability of expected rewards, we use a diverse dataset of prompts for which we expect there to be variability in the quality of model responses, both within models (i.e., for a given model, the model will give better responses to some prompts than other prompts, in expectation) and between models (i.e., for a given prompt, some models will give better responses than other models, in expectation).

Specifically, we use the open-perfectblend dataset \citep{open-perfectblend}, which is an open source near-replication of the mixture of datasets introduced in \citet{xu2024perfect}.
The dataset was initially designed as a diverse supervised fine-tuning training set,
with prompts from datasets focused on general chat capabilities, instruction following, math reasoning, and code reasoning (see documentation for \citet{open-perfectblend} for full details).

For our experiments, we sample $1000$ prompts at random from each of the $4$ mentioned categories in open-perfectblend for a total of $4000$ prompts.
The general chat data in open-perfectblend is formatted as multi-turn conversations, but here we focus on reward for single-turn responses, so we truncate to the first user query in the chat conversation.

\paragraph{Large Language Models.}
We study expected reward predictability across several LLM sizes in open weight model families.
We focus on the following instruction-tuned generating models: \textbf{Llama 3.1-IT (8B, 70B)} \citep{dubey2024llama}, \textbf{Gemma 2-IT (9B, 27B)} \citep{team2024bgemma2}, \textbf{Gemma 1-IT (7B)} \citep{team2024agemma}. When sampling from each model, we use standard autoregressive temperature sampling, where we sample one token at a time, conditional on all that came before.
For all models, we set the temperature to $1.0$.

\paragraph{Reward Models.}
We study the predictability of the expected scores from three reward models: \textbf{OpenAssistant-RM}\footnote{\url{https://huggingface.co/OpenAssistant/reward-model-deberta-v3-large-v2}}, \textbf{GRM-2B-RM}\footnote{\url{https://huggingface.co/Ray2333/GRM-Gemma-2B-rewardmodel-ft}} \citep{yang2024regularizing}, \textbf{InternLM-RM}\footnote{\url{https://huggingface.co/internlm/internlm2-7b-reward}} \citep{cai2024internlm2}. These reward models were all trained using the Bradley-Terry objective to encode complex, aggregate human preferences, performed well on RewardBench for their respective sizes, and were chosen to be small due to computational restrictions.

\subsection{Related Work}
\paragraph{Distributional Property Prediction.}
The key idea in this paper is to predict a priori a distributional property of an LLM output (namely, the expected reward) from the prompt alone. 
The idea of predicting distributional aspects has also occurred in some other contexts.
For example, \citet{kossen2024semantic} train linear probes to predict the entropy of the output distribution in the context of hallucination detection. \citet{wang2024transforming} also train a predictor for the expected reward as a function of the prompt as a component of a modified RLHF algorithm. In contrast to the present paper, they do not attempt to quantify the quality of this predictive model. 

\paragraph{Model Routing Methods.}
Model routing has been an active area of research for cost-effective use of LLMs.
Approaches include preference model--based routing \citep{ong2024routellm}, to which we made a comparison above; and cascade-based routing, were models are sequentially queried until an acceptable answer is found, as judged by a scoring model \citep{chen2023frugalgpt} or a self-verification mechanism \citep{madaan2023automix}.
Other approaches close to ours also use predictions of model-wise response quality \citep{nguyen2024metallm, liu2024optllm}, but focus on cases where responses can be judged in terms of binary accuracy. The most similar work to ours that indirectly performs reward prediction is the Zooter method proposed by \citet{lu2023routing}, which trains a router to predict the winning model's distribution (i.e., response with highest reward). The output logits of this network can be interpreted as a form of reward prediction. Our work shows that the direct prediction of expected reward at the model level is equally effective, more scalable, and a likely explanation for the effectiveness of reward prediction-based routers such as Zooter.

\section{Predictability of the Expected Reward}

\begin{figure*}[t]
    \includegraphics[width=\textwidth]{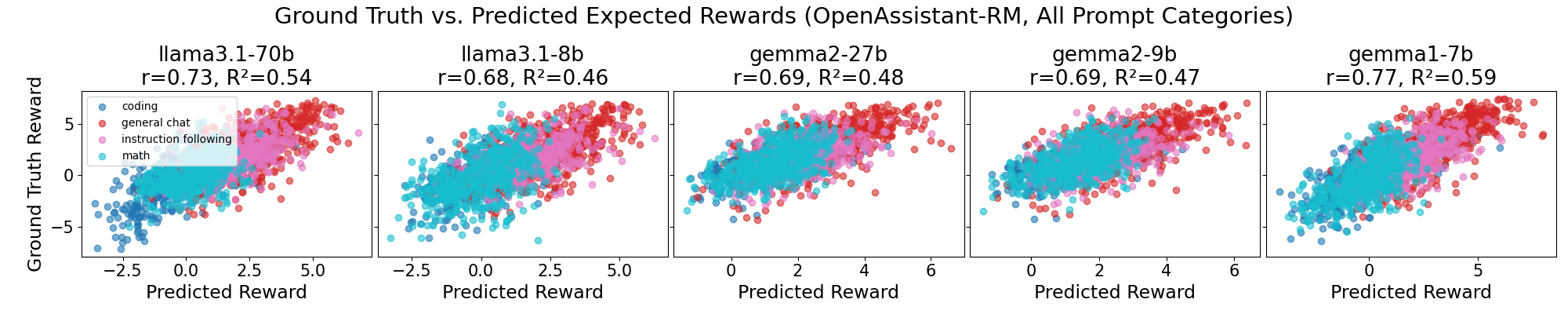}
    \caption{
    Expected reward is predictable both within and between prompt classes, across model families and sizes.
    Each point shows the predicted and empirical expected reward for a prompt from the test split of the open-perfectblend dataset. Points are colored by their category. Predictive power is measured by $R^2$;
    roughly, the variance explained by the predictions. Additional reward plots for other RMs can be found in \Cref{fig:prediction-results-combined}.
    \label{fig:prediction results}}
\end{figure*}

To begin, we report on our experiments studying the predictability of expected reward across generating models and reward models.
The key finding is that, in practice, expected rewards can be predicted with remarkable precision using a linear model built on top of an external embedding of the prompt. That is, the a priori prediction problem is indeed solvable, and, moreover, the prediction can be done with a simple linear model.

\paragraph{Task Setup.}
Given a prompt $x$, a language model ${\pi(Y\given x)}$, and a reward model $r$, our goal is to predict the expected reward that would be assigned to responses sampled from the model,
\begin{equation}
\text{ER}_\pi(x) := \EE_{Y \sim \pi(Y \given x)}[r(x, Y)].
\label{eq:expected_reward_predictor}
\end{equation}
Here, our goal is to train a model that directly predicts $\text{ER}_\pi(x)$ from $x$, without the need to take repeated samples at inference time.

To set up the learning problem, we build datasets $\mathcal D_\pi = \{(x_i, \widehat{\text{ER}}_\pi(x_i))\}_{i=1}^N$ by brute force. For each prompt $x \in \mathcal X$, we sample $K$ responses $\{y_i^{(k)}\}_{k=1}^{K}$ from the language model, compute the reward $r(x_i, y_i^{(k)})$ for each response, and then record the empirical mean reward $\widehat{\text{ER}}_\pi(x_i) = \frac{1}{K} \sum_{k=1}^{K} r(x_i, y_i^{(k)})$ as the target.
For all experiments here, we use $K = 32$.
The learning task is to predict label $\widehat{\text{ER}}_\pi(x_i)$ from the prompt $x_i$.

We split these datasets into training and test sets in a $50/50$ manner stratified across prompt categories. Namely, in each category, we delegate half the prompts to the training set and the remaining half to the test set.

\paragraph{Linear Models.}
We train models to predict the empirical mean reward $\widehat{\text{ER}}_\pi(x)$ from each prompt $x$ with linear models that take the a fixed-length vector representation, or embedding $v(x)$ of the prompt $x$ as input. 
We use ridge-regularized linear models that solve 
\begin{equation}
\argmin_{\theta} \EE_{X \sim p_{\mathcal X}(X)}[(\theta^\top v(X) - \widehat{\text{ER}}_\pi(X))^2] + \beta \| \theta \|^2_2 \label{eq:ridge loss}
\end{equation}
For the embedding representation $v(x)$, we use \textbf{gte-large-en-v1.5}, an off-the-shelf pre-trained embedding model with embedding dimension 1024 \citep{zhang2024mgte}. We chose this embedding model due to being lightweight at under $0.5$ billion parameters and its strong performance (rank 33) on the MTEB leaderboard \citep{muennighoff2022mteb}. We set $\beta=1$ to optimize $R^2$ performance and prevent overfitting.

\begin{table}[t]
    \small
    \caption{$R^2$ of expected reward prediction within each open-perfectblend category of prompts using OpenAssistant-RM.
    For most categories, and for most models, the predictor explains variation in rewards, even after conditioning on the question category, suggesting that expected rewards give a finer-grained view of model capabilities. Additional tables with $R^2$ values for other RMs can be found in \Cref{tab:simplified-routing-combined}.}
    \label{tab:simplified_routing}
    \setlength{\tabcolsep}{3.5pt}
    \centering
    \begin{tabular}{l c c c c c}
        \toprule
        Model Name & Aggregate & Coding & Math & Instruction Following & General Chat \\
        \midrule
        \textbf{llama3.1-70b}  & 0.54  & 0.41  & 0.19  & 0.25  & 0.42  \\
        \textbf{llama3.1-8b}   & 0.46  & 0.26  & 0.24  & 0.23  & 0.39  \\
        \textbf{gemma2-27b}  & 0.48  & 0.37  & 0.30  & 0.27  & 0.45  \\
        \textbf{gemma2-9b}   & 0.47  & 0.34  & 0.31  & 0.25  & 0.46  \\
        \textbf{gemma1-7b}   & 0.59  & 0.50  & 0.34  & 0.25  & 0.49  \\
        \bottomrule
    \end{tabular}
\end{table}

\paragraph{Results.} In \Cref{fig:prediction results}, we summarize the predictive performance of linear reward prediction models on open-perfectblend, both in aggregate and within prompt categories\footnote{We do not use category information in any way at training time, we only split the test set by category for displaying results.}. The results show that expected rewards from our chosen reward models are indeed predictable, and linear predictions from embeddings can capture a substantial fraction of the variation in held-out test data, both within and between prompt categories in open-perfectblend.

Notably, for most categories, the predictions and targets have relatively well-behaved distributions,
and predictive performance is not easily explained away, for example, by obvious artifacts in the data like a large fraction of refusals to answer.
The predictability here suggests that even within specific categories, variation in each model's capabilities (at least as measured by the reward models) across fine-grained prompt classes is easily linearly represented in terms of the general-purpose prompt embedding.

There are some exceptions to this pattern in the predictions of GRM-2B-RM rewards.
Specifically $R^2$ values within the math category are low for all models except Llama-3.1 70B, and are similarly low in the instruction following category for the Llama-3.1 models.
Nonetheless, aggregate reward remains highly predictable, because the GRM-2B-RM groups scores for these categories relatively tightly in the full range of scores.

\paragraph{Discussion.}
As we have emphasized, the predictability of expected reward at all is somewhat surprising.
One possible explanation is that modern reward models are finetuned from pre-trained language models on datasets where there are typically many prompts and only a relatively small number of responses to each prompt, which may bias the model to using a single reward scale to facilitate generalizing across prompts.
Another possibility is that some responses, such as refusals to answer, are assigned low rewards in training data regardless of prompt, creating common structure between prompts that the reward models can exploit.
However, this is speculation, and the phenomenon deserves more precise investigation.
It would be useful to better understand the classes of reward models for which we can expect expected reward prediction to be effective, and to perhaps consider training strategies that actively encourage this property in reward scores.

\section{Applications to Model Routing}

We have now established that prompt-level expected rewards from our two reward models have systematic predictable structure within each model.
Here, we show that these predictions are sufficiently precise to support practical comparisons between models.
To do this, we use the predictable structure in prompt-level expectations to design a simple but effective proof-of-concept model routing algorithm.
The success of this algorithm is evidence that predicted expected rewards go beyond classifying generally ``good'' or ``bad'' prompts, but actually lift response-level scores to model-level scores that can be used to discriminate between them.

\subsection{Model Routing Setup}
\paragraph{Task Definition.} Suppose we have a pool of $M$ models, $\Pi := \{\pi_i\}_{i=1}^M$.
When we receive a prompt query $x$, we can choose a model to which to route the prompt, and return the response from that model.
The goal is to choose a model that can generate a high-quality response cheaply.

Formally, for a given prompt $x$, model $i$ can generate a response $Y_i$ for a computational cost $c(i, x, Y_i)$, earning a reward $r(x, Y_i)$.
We aim to learn a routing protocol that maps prompts to models to maximize the expected reward while also maintaining a low computational cost. That is, we want a router $\rho:\mathcal X\mapsto\{0, 1,\cdots, M-1\}$ satisfying the constrained optimization problem
\begin{equation}
\begin{aligned}
    \argmax_\rho \quad & \EE_{x\sim p_{\mathcal X}(x)}\EE_{Y \sim \pi_{\rho(x)}(Y \given x)}[r(x, Y)], \\
    \text{subject to} \quad & \EE_{x\sim p_{\mathcal X}(x)}\EE_{y \sim \pi_{\rho(x)}(Y \given x)}[c(\rho(x), x, Y)]\le C,
\end{aligned}
\label{eq:model_routing_optimization_problem}
\end{equation}
where $p_{\mathcal{X}}$ is a uniform distribution over a space of prompts $\mathcal X$, and $C$ is the computational budget.

For simplicity, we focus on the case where the cost function $c(i, x, y)$ only depends on the model $i$ and is proportional to the number of parameters in model. This is a reasonable first order approximation, though there are of course substantial subtleties related to expected generation length, details of model architectures, the relative expense of FLOPS vs.\ memory, and so forth.

\paragraph{Routing Evaluations.} We evaluate router quality by the regret it incurs relative to an optimal (oracle) policy for assigning prompts to models. For a given prompt $x$, the regret of (deterministic) policy $\rho$ is
\begin{equation}
\begin{aligned}
    \mathcal R(m, x) := \max_{i \in \Pi} \EE_{Y \sim \pi_{i}(Y \given x)}[r(x, Y)] - \EE_{Y \sim \pi_{\rho(x)}(Y \given x)}[r(x, Y)].
\end{aligned}
\end{equation}

\paragraph{Preference-Based Model Routing.}
Model routing has been approached before in \citet{ong2024routellm} as a preference learning problem, where the data have the form $(x, p_+, p_-)$, where $x$ is the prompt, $p_+$ is the winning model with the higher reward, and $p_-$ is the losing model with the lower reward.
It is assumed that this pairwise comparison data exists for all pairs of models in $\mathcal P$.
A preference model is then trained on the preference data, using an objective similar to \eqref{eq:BT loss}, which maps prompts to a score for each model that can be used to rank the models and route the prompt at inference time.

\paragraph{Expected Reward Prediction (ERP)-Based Routing.}
Here, we propose a straightforward alternative that leverages the predictability of expected rewards.
For each model in $\Pi$, we train a linear predictor using \cref{eq:ridge loss} to predict expected rewards. 
In the simplest policy, we then just route each prompt $x$ to the model with the highest predicted expected reward for that prompt. 

However, routing to the highest reward prediction doesn't incorporate the budget constraint in \cref{eq:model_routing_optimization_problem}. To account for cost, we introduce a parameter $\lambda$ controlling the relative importance of cost and response-level reward and define the cost-adjusted policy as
\begin{equation}
\begin{aligned}
\rho_\lambda = &\max_{\rho'}\EE_{x\sim p_{\mathcal X}(x)}\EE_{Y \sim \pi_{\rho'(x)}(Y \given x)} [r(x, Y)-\lambda(c(\rho'(x), x, Y)-C)].
\end{aligned}
\label{eq:dual_model_routing_problem}
\end{equation}

This expected reward prediction routing method has some intrinsic advantages.
Of particular note is that the expected reward prediction models can be trained on each model separately, rather than requiring pairwise comparisons between samples from different models.
This means that if new models are added to the model pool, one only needs to train a single prediction model for each new model, rather than training a new preference model that requires pairwise comparisons between each model in the old and new sets.
In general, data requirements for this method only scale as the number of models, whereas data requirements for direct preference modeling methods scale as the square of the number of models. 

This simplicity is not free. It essentially requires that the expected reward suffices to characterize the overall performance of the model. This might not be true, for example, if one of the LLMs produced low quality samples with high probability, but extremely high quality samples with low probability---in this case, the expected reward would be high even though the typical response is poor.
However, if LLMs tend to produce samples with somewhat similar rewards, then the expected reward is a reasonable way of summarizing the overall distribution of outputs. The following result gives a particular formalization of this idea:
\begin{proposition}
 Let $\pi_0(r(x,Y) \given x)$ and $\pi_1(r(x,Y) \given x)$ be the distributions of rewards under models $\pi_0, \pi_1$ respectively. Suppose these distributions are both $\sigma^2$-subgaussian. Then, if $Y_0 \sim \pi_0$ and $Y_1 \sim \pi_1$, we have
  \begin{equation*}
      \Pr(r(x, Y_1) > r(x, Y_0)) \ge 1 - e^{-\frac{(\text{ER}_{\pi_1}(x) - \text{ER}_{\pi_0}(x))^2}{4\sigma^2}}
  \end{equation*}
  \label{prop:reward_dist}
\end{proposition}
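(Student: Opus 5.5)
The plan is to reduce the statement to a one-sided Chernoff bound for the difference of the two rewards. Write $R_0 := r(x,Y_0)$ and $R_1 := r(x,Y_1)$, and set $D := R_1 - R_0$. By definition of $\text{ER}_\pi$ in \cref{eq:expected_reward_predictor}, $\EE[D] = \Delta := \text{ER}_{\pi_1}(x) - \text{ER}_{\pi_0}(x)$.

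Two assumptions are implicit in the statement, and I would make both explicit.
\begin{itemize}
\item $Y_0$ and $Y_1$ are drawn \emph{independently} given $x$, which is the natural reading of "$Y_0 \sim \pi_0$ and $Y_1 \sim \pi_1$".
\item $\Delta \ge 0$. If $\Delta < 0$ the right-hand side is still positive, but $\Pr(R_1 > R_0)$ can be close to $0$, so the inequality is only meant for the model with the larger expected reward.
\end{itemize}
I also read "$\sigma^2$-subgaussian" in the standard centered sense: $\EE[e^{s(R_j - \EE R_j)}] \le e^{s^2\sigma^2/2}$ for all $s \in \mathbb R$.

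The key steps are as follows.

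First, show that $D - \Delta$ is $2\sigma^2$-subgaussian. By independence, the moment generating function factorizes:
\begin{equation*}
\EE[e^{s(D-\Delta)}] = \EE[e^{s(R_1 - \EE R_1)}]\,\EE[e^{-s(R_0 - \EE R_0)}] \le e^{s^2\sigma^2/2}\, e^{s^2\sigma^2/2} = e^{s^2 (2\sigma^2)/2}.
\end{equation*}
The second factor uses the subgaussian bound at $-s$, which is allowed because the definition is two-sided in $s$.

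Second, apply the Chernoff/Markov argument to the lower tail. For any $s>0$,
\begin{equation*}
\Pr(D \le 0) = \Pr(D - \Delta \le -\Delta) \le e^{-s\Delta}\,\EE[e^{-s(D-\Delta)}] \le e^{-s\Delta + s^2\sigma^2}.
\end{equation*}
Optimizing at $s = \Delta/(2\sigma^2)$, which is positive when $\Delta>0$, gives $\Pr(D\le 0) \le e^{-\Delta^2/(4\sigma^2)}$.

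Third, take complements: $\Pr(R_1 > R_0) = 1 - \Pr(D \le 0) \ge 1 - e^{-\Delta^2/(4\sigma^2)}$. The strict inequality in the event is harmless, since the bound above was taken on the closed event $\{D\le 0\}$. When $\Delta = 0$ the claimed bound reads $\Pr(\cdot) \ge 0$, which is trivial.

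The only real obstacle is interpretive rather than technical: the constant $4\sigma^2$ comes out exactly only with independence, which makes the variance proxy of $D$ equal to $2\sigma^2$, and only with the nonnegative-gap convention. I would state both assumptions up front. If independence were not intended, one could still bound $\EE[e^{sD}]$ via Cauchy--Schwarz, but that gives variance proxy $4\sigma^2$ and hence the weaker exponent $\Delta^2/(8\sigma^2)$. So the stated constant signals that independence is intended.
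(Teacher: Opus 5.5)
Your proposal is correct and is the same argument as the paper's. The paper's proof consists only of the remark that $r(x,Y_1)-r(x,Y_0)$ is subgaussian with doubled parameter, followed by the standard lower-tail bound; you fill in the MGF factorization and the Chernoff optimization, and your explicit assumptions (independence of $Y_0$ and $Y_1$, and $\text{ER}_{\pi_1}(x)\ge\text{ER}_{\pi_0}(x)$) are the ones the paper leaves implicit, with independence being what makes the variance proxy of the difference $2\sigma^2$ rather than $4\sigma^2$ and hence yields the stated constant $4\sigma^2$ in the exponent.
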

\begin{proof}
Observe that $r(x, Y_1) - r(x, Y_0)$ is $2\sigma$-subgaussian. The result follows immediately.
\end{proof}
In words, this says that the difference in expected rewards relates directly to the win rate of model $\pi_1$ vs $\pi_0$ on the prompt $x$. Thus, in the particular case where the reward distributions are relatively concentrated, routing the model with highest expected reward is a good approximation for routing to the model with the highest overall win rate.

\subsection{Warmup: Pairwise ERP vs Preference-Based Win Prediction}

Consider the
binary classification task of predicting, for each prompt, which of the two models will generate a better response, according to each of our reward models.
We take the binary label to be $\Ind[r(x, Y_1) > r(x, Y_0)]$, where we produce these labels by using generations from each model. \Cref{fig:pairwise} shows the AUROC for the ERP-based binary predictor computed by thresholding $\sigma\left(\widehat{\text{ER}}_{\pi_1}(x) - \widehat{\text{ER}}_{\pi_0}(x)\right)$ across both reward models on the open-perfectblend test data. We show results for two reward models and each pair of LLMs. The main observation is that the AUROCs are high, meaning we can predict prompt-wise model preferences from the predicted expected rewards.

As a point of comparison, \Cref{fig:pairwise} also shows the AUROC for binary predictors trained directly on pairwise reward comparison data for each \emph{pair} of models, using logistic regression on the same input representation $v(x)$.
Notice that the values are no better than the expected reward prediction routing.
That is, ERP routing, based only on single-model rewards, matches the pairwise classifier's performance.

\begin{figure*}[t]
    \centering
        \includegraphics[width=0.35\textwidth]{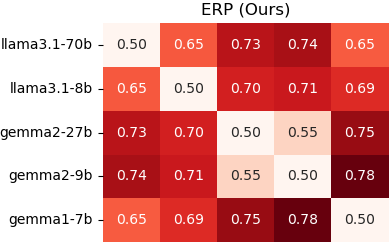}
        \hspace{0.2cm}
        \includegraphics[width=0.39\textwidth]{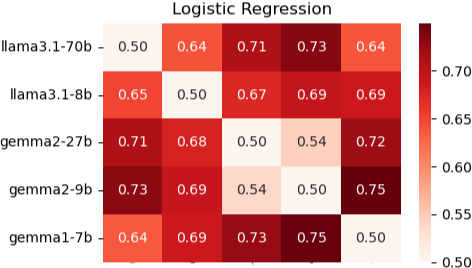}
    \caption{Comparison of our expected reward predictor (ERP) to logistic regression when predicting pairwise model wins using OpenAssistant-RM. AUCs of each win prediction method are nearly identical, but ERP only requires one predictor per model, instead of a separate logistic regression model for each model pair. Additional AUROC figures for other RMs are available in \Cref{fig:pairwise-combined}.}
        \label{fig:pairwise}
\end{figure*}

\subsection{Routing Experiment}
We now present our main experimental results for the model routing application.
In this experiment, we route each prompt in our open-perfectblend test split according to a routing policy based on expected reward prediction (ERP) as well as several baseline policies.
Across a range of values for the reward-cost exchange-rate parameter $\lambda$, we show that the ERP-based policy improves on the baselines in terms of regret and cost. Furthermore, in a separate set of experiments, we demonstrate that ERP also provides an improved regret-cost tradeoff in the setting of \textit{verifiable rewards}. Details on the verifiable reward experiments are provided in \Cref{apndx:verifiable_rewards}.

\begin{figure*}[t]
    \includegraphics[width=\textwidth]{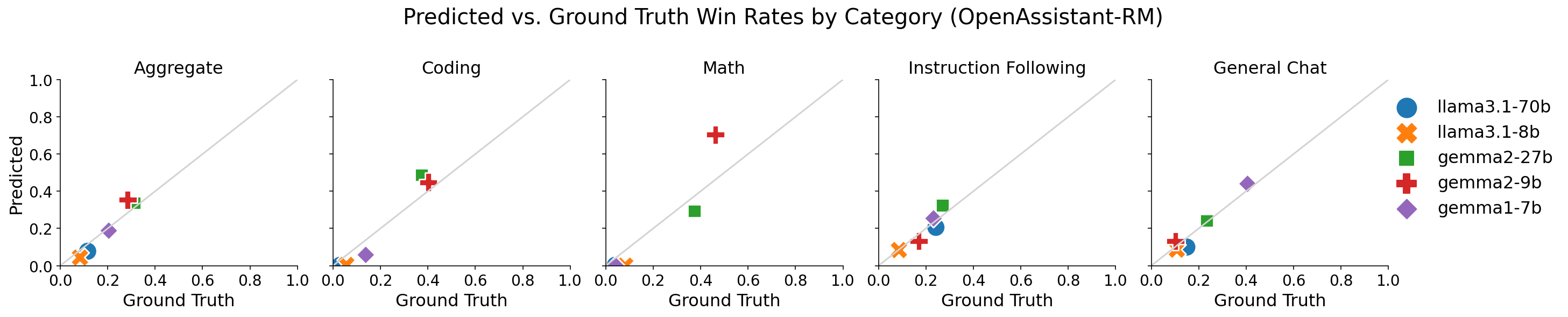}
    \caption{ERP comparisons accurately predict win rates within each category. Notably, no single model dominates across all categories, and ERP captures model-wise differences. Additional win rate figures for ERP using other RMs can be found in \Cref{fig:routing-win-rates-combined}.\label{fig:routing_win_rates}}
\end{figure*}

\paragraph{Variability in model performance.}
We first establish that there is enough variability in model performance across categories to warrant prompt-wise routing.
\Cref{fig:routing_win_rates} shows that within most open-perfectblend categories, for each reward model, there is diversity across the models that provide the best response to each prompt among the five models in our pool. 
Notably, GRM-2B-RM's scores are rather lopsided in some categories, a fact that we will return to in our results.
However, broadly, this winner diversity establishes that, even within prompt categories, there is an opportunity to improve served responses by routing each prompt to a predicted best model.
In addition, as in the pairwise comparison above, predicted win rates, according to ERP, closely mirror ground truth.

\paragraph{Routing policy and baselines.} Our test policies are:
\begin{itemize}[leftmargin=0.3cm]
    \item\textbf{ERP-based.} Directly predict the cost-adjusted reward for each model, and choose the model that maximizes it. When the cost function only depends on the model---as in our experiments---the policy is 
    \begin{equation}
    \rho_\lambda(x) = \argmax_{i\in \Pi} \widehat{\text{ER}}_{\pi_i}(x) -\lambda c(i).
    \end{equation}
    \item\textbf{Zooter.} Collectively predicts the softmax distribution for the best response. That is, given a prompt $x$ and reward score for each model's response $\{r_i\}_{i=1}^M$, Zooter learns a network $\mathcal{Z}$ that maps $x$ to an $M$-dimensional probability distribution $\{p_i\}_{i=1}^M$ trained to minimize the KL-divergence with respect to the ground-truth winning response distribution $\text{softmax}(r_1, r_2, \cdots, r_M)$. Through the logits of $\mathcal{Z}$, Zooter can be interpreted as reward prediction based routers. ERP, in turn, can be seen as a simplified and more scalable version of Zooter that directly regresses reward for each individual model as opposed to a single collective prediction across models. 
    \item\textbf{Same model.} Route every prompt to a fixed model.
    \item\textbf{Random permutation.} Randomly reorders the model routing predictions of our expected reward predictor. Removes the conditioning on prompt $x$ but preserves the predicted best model distribution of our expected reward predictor and hence the incurred cost as well.
    \item\textbf{Purely random.} Route prompts to random models. 
    \item\textbf{Per-category best (oracle).} Route each prompt in a category to the model that had the best average cost-adjusted reward in that category in the training set. Requires oracle knowledge of the prompt categories at inference time.
\end{itemize}
Notably, we omit a preference-based router because a 5-class preference model would be significantly more complex to train than the ERP-based policy, and because performance was comparable to direct win prediction in our pairwise experiment.
Extending the ERP-based policy to five models, on the other hand, uses the same components that we used in the pairwise setting, with no additional training.

\paragraph{Results.} We evaluate each routing policy based on its ability to trade off prompt-averaged cost and regret effectively.
In \Cref{fig:pareto-OpenAssistant} and \Cref{fig:pareto-curves-combined}, we plot the Pareto frontier induced by different exchange-rate values $\lambda$, for the OpenAssistant-RM, GRM-2B-RM, and InternLM-RM rewards, respectively.
For all reward models, the ERP and Zooter routers are able to establish a Pareto frontier that contains all non-oracle baselines.
The comparison to the per-category best oracle (black diamonds), which has access to oracle prompt category labels at test time, is also compelling. Even without category labels, ERP and Zooter are able to provide similar cost-performance tradeoffs across all three reward models. For OpenAssistant-RM and InternLM-RM, ERP dominates the oracle, while for GRM-2B-RM, the oracle breaks the Pareto front at one point, earning slightly lower regret at a higher cost at one point. In part, this reflects the fact that for this reward model, within some categories, most wins went to a single model, so routing all queries in those categories to a single model is nearly optimal.

Given the similarity in effectiveness and increased simplicity of ERP in relation to Zooter, we may reasonably infer the success of Zooter is inherently from the predictability of expected reward. In addition, the scalability of ERP with respect to the number of models is more favorable since only responses and rewards from a new model are needed to add it to an ERP router.    

\section{Discussion and Limitations}
\label{sec:limitations}
The main result of this paper is to demonstrate a useful property of LLMs and reward models: the per-prompt expected reward for a given model is readily predictable, and can serve to lift reward functions on responses to be reward functions on models.
When expected rewards are predictable, downstream tasks that might otherwise require preference-level comparison data can be achieved effectively by predicting model-level reward distributions in isolation.
While we explored model routing as the primary application in this paper, many other inference-time applications could be possible, including, for example, hot-swapping system prompts.

\printbibliography

\appendix
\section{Appendix}
\subsection{ERP for Verifiable Rewards}
\label{apndx:verifiable_rewards}
To test the generality of our approach beyond preference-based rewards, we evaluated ERP on tasks with objective correctness using the MMLU-Pro dataset \citep{mmlu-pro} ($12000$ examples across multiple disciplines). Our model pool consisted of Llama 3.1 8B, Llama 3.1 70B, Qwen 2.5 7B, and Qwen3 235B A22B. We used a binary reward function: 1 for correct multiple-choice answers, and 0 otherwise.
The results in \Cref{fig:verifiable_reward_pareto} show that ERP remains effective in this setting, where we establish a strong Pareto frontier that outperforms baselines including Zooter. Notably, ERP nearly matches the per-category oracle baseline without requiring knowledge of prompt categories at inference time. These findings demonstrate that expected reward predictability extends beyond potentially biased reward models.

\begin{figure*}[h]
\centering
\includegraphics[width=0.6\textwidth]{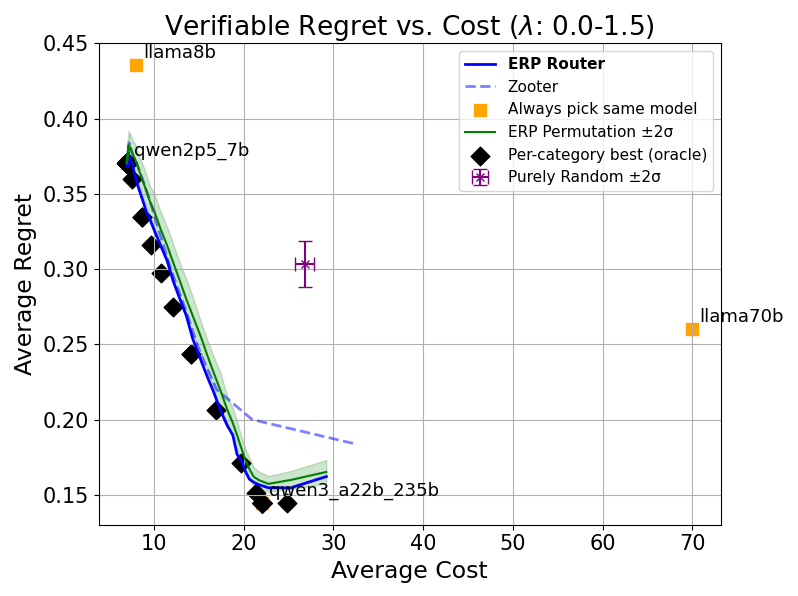}
\caption{Regret-cost tradeoff for verifiable rewards on MMLU-Pro. ERP (blue) achieves a near-optimal Pareto frontier, closely matching the per-category oracle (black diamonds) and outperforming other baselines.}
\label{fig:verifiable_reward_pareto}
\end{figure*}

\subsection{Additional Figures}

\begin{figure*}[h!]
    \centering
    \begin{subfigure}[b]{0.43\textwidth}
        \centering
        \includegraphics[width=\textwidth]{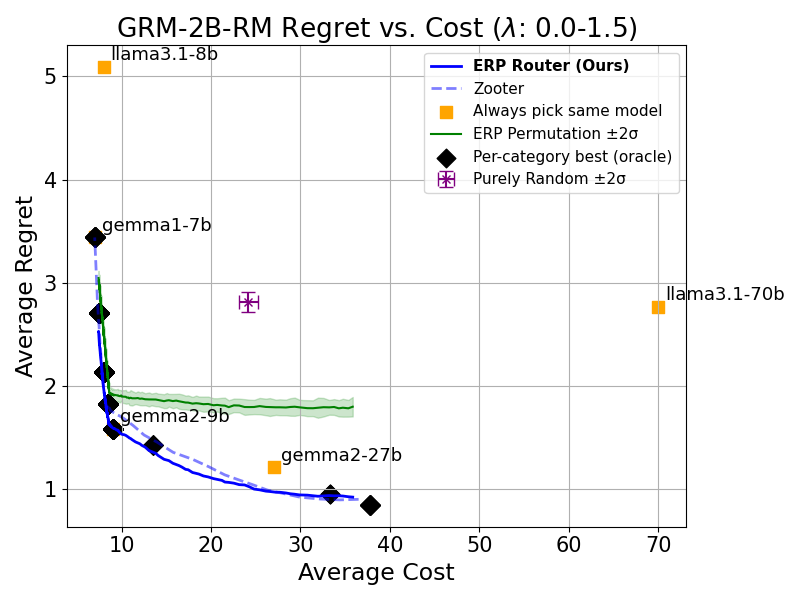}
        \caption{GRM-2B-RM}
        \label{fig:pareto-grm}
    \end{subfigure}
    \hfill
    \begin{subfigure}[b]{0.43\textwidth}
        \centering
        \includegraphics[width=\textwidth]{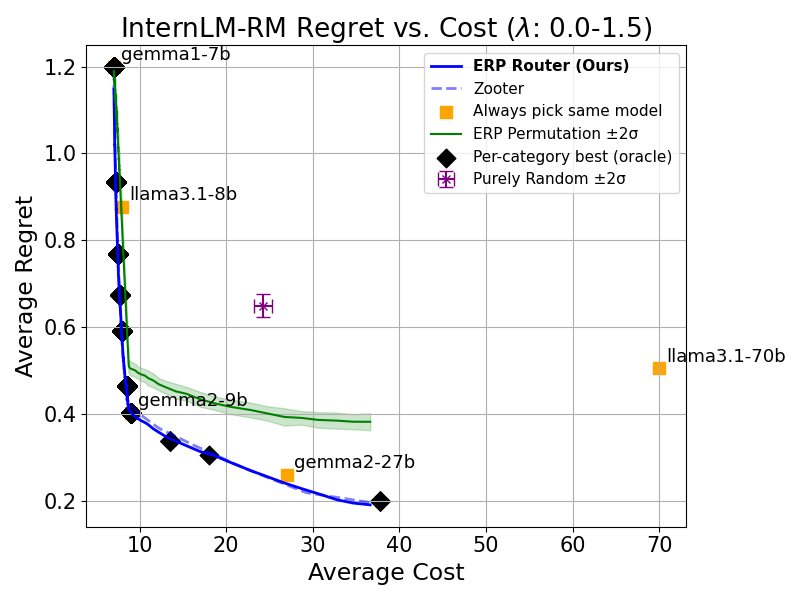}
        \caption{InternLM-RM}
        \label{fig:pareto-internlm}
    \end{subfigure}
    \caption{Regret-cost tradeoff curves using GRM-2B-RM and InternLM-RM on open-perfectblend.}
    \label{fig:pareto-curves-combined}
\end{figure*}

\begin{figure*}[h!]
    \includegraphics[width=\textwidth]{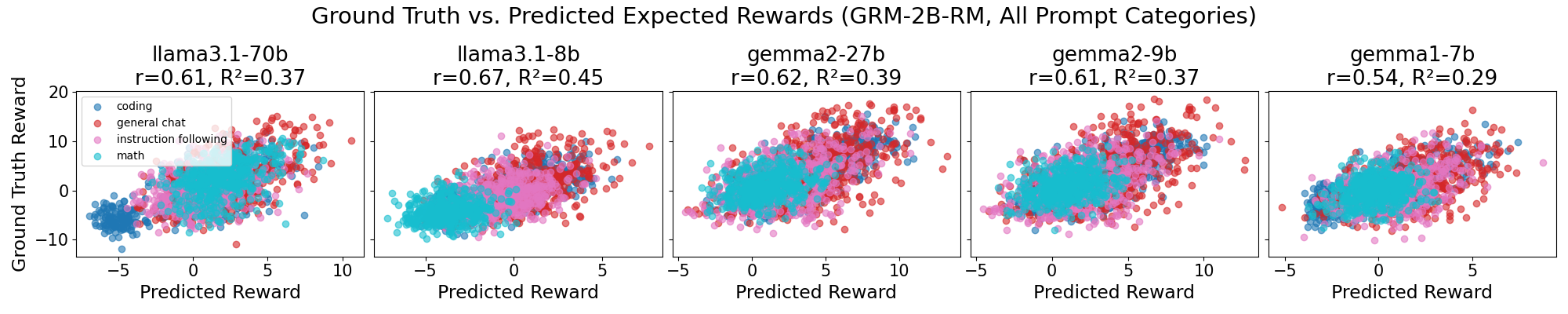}
    \includegraphics[width=\textwidth]{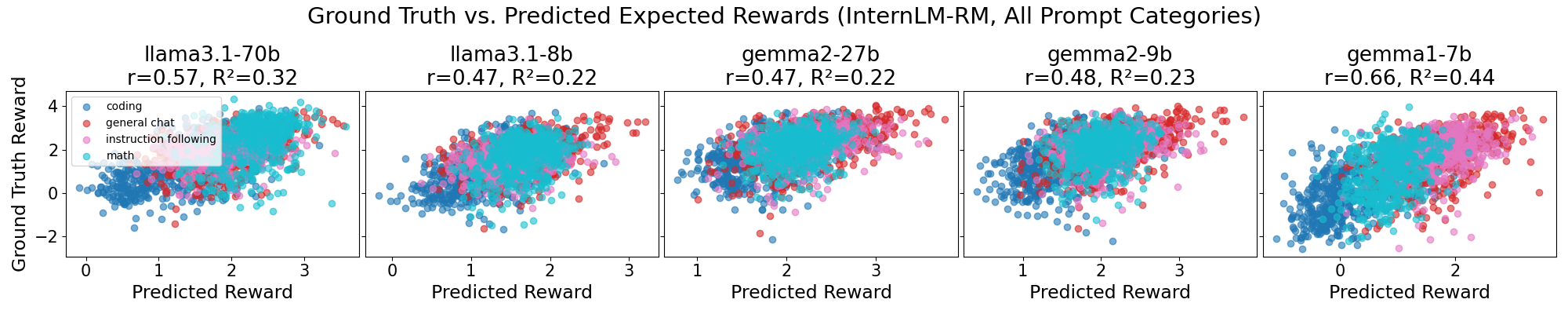}
    \caption{Predicted vs.\ actual reward on our test split of the open-perfectblend dataset for the GRM-2B (top row) and InternLM (bottom row) reward models for our set of five generating models (columns).
    \label{fig:prediction-results-combined}}
\end{figure*}

\clearpage

\begin{table}[h]
    \caption{$R^2$ of expected reward prediction within each open-perfectblend category of prompts using GRM-2B-RM and InternLM-RM.}
    \label{tab:simplified-routing-combined}
    \small
    \centering
    \setlength{\tabcolsep}{3.5pt}
    \begin{tabular}{l c c c c c}
        \toprule
        Model Name & Aggregate & Coding & Math & Instruction Following & General Chat \\
        \midrule
        \textbf{GRM-2B-RM} & & & & & \\
        llama3.1-70b  & 0.37  & 0.55  & 0.18  & 0.11  & 0.30  \\
        llama3.1-8b   & 0.45  & 0.60  & -0.21  & 0.09  & 0.24  \\
        gemma2-27b  & 0.39  & 0.61  & 0.01  & 0.19  & 0.29  \\
        gemma2-9b   & 0.37  & 0.59  & 0.03  & 0.16  & 0.28  \\
        gemma1-7b   & 0.29 & 0.51 & 0.01 & 0.12 & 0.27 \\
        \midrule
        \textbf{InternLM-RM} & & & & & \\
        llama3.1-70b  & 0.32  & 0.36  & 0.12  & -0.14  & 0.15\\
        llama3.1-8b   & 0.22  & 0.26  & 0.09  & -0.03  & 0.18  \\
        gemma2-27b  & 0.22  & 0.19  & 0.04  & 0.02  & 0.25  \\
        gemma2-9b   & 0.23  & 0.17  & 0.02  & 0.01  & 0.21  \\
        gemma1-7b   & 0.44  & 0.23  & 0.09  & 0.06  & 0.24  \\
        \bottomrule
    \end{tabular}
\end{table}

\begin{figure*}[h]
    \centering
    \begin{subfigure}[b]{\textwidth}
        \centering
        \includegraphics[width=0.35\textwidth]{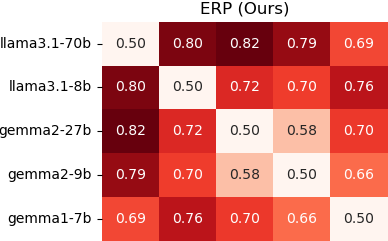}
        \hspace{0.2cm}
        \includegraphics[width=0.39\textwidth]{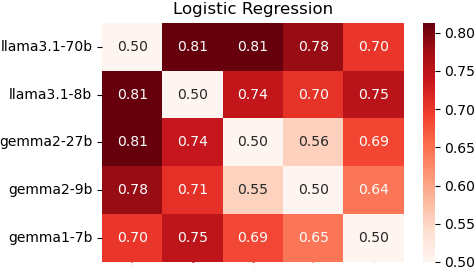}
        \caption{AUROC scores using GRM-2B-RM.}
    \end{subfigure}
    \begin{subfigure}[b]{\textwidth}
        \centering
        \includegraphics[width=0.35\textwidth]{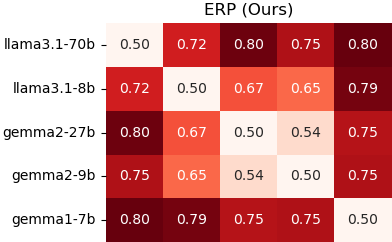}
        \hspace{0.2cm}
        \includegraphics[width=0.39\textwidth]{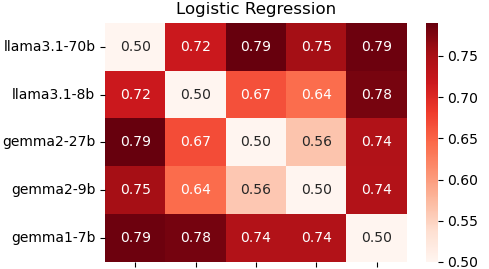}
        \caption{AUROC scores using InternLM-RM.}
    \end{subfigure}
    \caption{Comparison of our expected reward predictor (ERP) to logistic regression when predicting pairwise model wins.}
        \label{fig:pairwise-combined}
\end{figure*}

\begin{figure*}[h]
    \includegraphics[width=\textwidth]{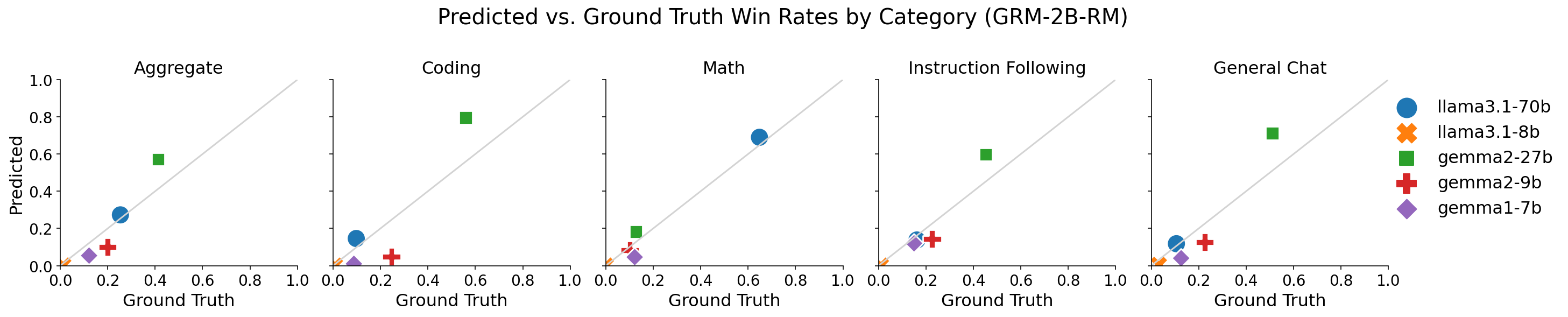}
    \includegraphics[width=\textwidth]{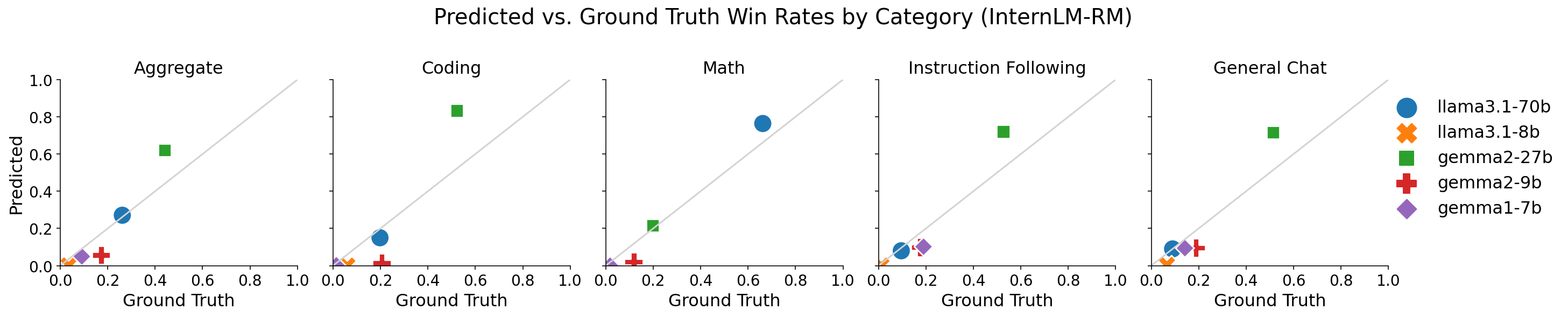}
    \caption{Ground truth and predicted win rates (among 5 LLMs) within each open-perfectblend category, according to GRM-2B-RM and InternLM-RM.\label{fig:routing-win-rates-combined}}
\end{figure*}

\end{document}